\newtheorem{lemma}{Lemma}
\newtheorem{theorem}{Theorem}
\newtheorem{definition}{Definition}
\newtheorem{assumption}{Assumption}
\theoremstyle{definition}
\renewcommand{\P}{\mathbb{P}}
\newcommand{\X}{\mathcal{X}}
\def\eqref#1{equation~\ref{#1}}
\def\1{\bm{1}}
\DeclareMathAlphabet{\mathsfit}{\encodingdefault}{\sfdefault}{m}{sl}
\SetMathAlphabet{\mathsfit}{bold}{\encodingdefault}{\sfdefault}{bx}{n}
\newcommand{\R}{\mathbb{R}}
\icmltitlerunning{Locally Adaptive Label Smoothing for Predictive Churn}
\begin{document}

\twocolumn[
\icmltitle{Locally Adaptive Label Smoothing for Predictive Churn}



\begin{icmlauthorlist}
\icmlauthor{Dara Bahri}{google}
\icmlauthor{Heinrich Jiang}{google}

\end{icmlauthorlist}

\icmlaffiliation{google}{Google Research, Mountain View, USA}

\icmlcorrespondingauthor{Dara Bahri}{dbahri@google.com}

\icmlkeywords{Machine Learning, ICML}

\vskip 0.3in
]



\printAffiliationsAndNotice{}  

\newcommand{\fix}{\marginpar{FIX}}
\newcommand{\new}{\marginpar{NEW}}

\newcommand{\todo}[1]{{\color{red}\textbf{TODO}: #1}}

\begin{abstract}

Training modern neural networks is an inherently noisy process that can lead to high \emph{prediction churn}-- disagreements between re-trainings of the same model due to factors such as randomization in the parameter initialization and mini-batches-- even when the trained models all attain similar accuracies. Such prediction churn can be very undesirable in practice. In this paper, we present several baselines for reducing churn and show that training on soft labels obtained by adaptively smoothing each example's label based on the example's neighboring labels often outperforms the baselines on churn while improving accuracy on a variety of benchmark classification tasks and model architectures.

\end{abstract}

\section{Introduction}

Deep neural networks (DNNs) have proved to be immensely successful at solving complex classification tasks across a range of problems. Much of the effort has been spent towards improving their predictive performance (i.e. accuracy), while comparatively little has been done towards improving the \emph{stability} of training these models \cite{zheng2016improving}. Modern DNN training is inherently noisy due to factors such as the random initialization of network parameters \cite{glorot2010understanding}, the mini-batch ordering \cite{loshchilov2015online}, the effects of various data augmentation \cite{shorten2019survey} or pre-processing tricks \cite{santurkar2018does}, and the non-determinism arising from the hardware \cite{turner2015estimating},
all of which are exacerbated by the non-convexity of the loss surface \cite{scardapane2017randomness}. This results in local optima corresponding to models that have very different predictions on the same data points. This may seem counter-intuitive, but even when the different runs all produce very high accuracies for the classification task, their predictions can still differ quite drastically as we will show later in the experiments. Thus, even an optimized training procedure can lead to high {\it prediction churn}, which refers to the proportion of sample-level disagreements between classifiers caused by different runs of the same training procedure\footnote{Concretely, given two classifiers applied to the same test samples, the prediction churn between them is the fraction of test samples with different predicted labels.}.

In practice, reducing such predictive churn can be critical. For example, in a production system, models are often continuously improved on by being trained or retrained with new data or better model architectures and training procedures. In such scenarios, a candidate model for release must be compared to the current model serving in production. Oftentimes, this decision is conditioned on more than just overall offline test accuracy-- in fact, the offline metrics are often not completely aligned with the actual goal, especially if these models are used as part of a larger system (e.g. maximizing offline click-through rate vs. maximizing revenue or user satisfaction) \cite{deng2013improving,beel2013comparative,dmitriev2016measuring}. As a result, these comparisons require extensive and costly live experiments, requiring human evaluation in situations where the candidate and the production model disagree (i.e. in many situations, the true labels are not available without a manual labeler) \cite{theocharous2015ad,deng2015objective,deng2016data}. In these cases, it can be highly desirable to lower predictive churn.

Despite the practical relevance of lowering churn, there has been surprisingly little work done in this area, which we highlight in the related work section. In this work, we focus on predictive churn reduction under retraining the same model architecture on an identical train and test set. Our main contributions are as follows:
\begin{itemize}
    \item We provide one of the first comprehensive analyses of baselines to lower prediction churn, showing that popular approaches designed for other goals are effective baselines for churn reduction, even compared to methods designed for this goal.
    \item We improve label smoothing, a {\it global} smoothing method popular for calibrating model confidence, by utilizing the {\it local} information leveraged by the $k$-NN labels thus introducing a locally adaptive label smoothing which we show to often outperform the baselines on a wide range of benchmark datasets and model architectures.
    \item We show new theoretical results for the $k$-NN labels suggesting the usefulness of the $k$-NN label. We show under mild nonparametric assumptions that for a wide range of $k$, the $k$-NN labels uniformly approximates the optimal soft label and when $k$ is tuned optimally, achieves the minimax optimal rate. We also show that when $k$ is linear in $n$, the distribution implied by the $k$-NN label approximates the original distribution smoothed with an {\it adaptive} kernel.
\end{itemize}

\section{Related Works}
Our work spans multiple sub-areas of machine learning. The main problem this paper tackles is reducing prediction churn. In the process, we show that label smoothing is an effective baseline and we improve upon it in a principled manner using deep $k$-NN label smoothing to obtain a locally adaptive version of it.

{\bf Prediction Churn.} There are only a few works which explicitly address prediction churn. \citet{fard2016launch} proposed training a model so that it has small prediction instability with future versions of the model by modifying the data that the future versions are trained on. They furthermore propose turning the classification problem into a regression towards corrected predictions of an older model as well as regularizing the new model towards the older model using example weights. \citet{cotter2019optimization,goh2016satisfying} use constrained optimization to directly lower prediction churn across model versions.
Simultaneously training multiple identical models (apart from initialization) while tethering their predictions together via regularization has been proposed in the context of distillation~\citep{anil2018large,zhang2018deep,zhu2018knowledge,song2018collaborative} and robustness to label noise~\citep{malach2017decoupling,han2018co}. This family of methods was termed ``co-distillation'' by \citet{anil2018large}, who also noted that it can be used to reduce churn in addition to improving accuracy. In this paper, we show much more extensively that co-distillation is indeed a reasonable baseline for churn reduction.


{\bf Label Smoothing.} Label smoothing \citep{szegedy2016rethinking} is a simple technique wherein the model is trained on the soft labels obtained by a convex combination of the hard true label and the soft uniform distribution across all the labels. It has been shown that it leads to better confidence calibration and generalization~\citep{muller2019does}. Here we show that label smoothing is a reasonable baseline for reducing prediction churn, and we moreover enhance it for this task by smoothing the labels {\it locally} via $k$-NN rather than a pure {\it global} approach mixing with the uniform distribution.

{\bf $k$-NN Theory.} The theory of $k$-NN classification has a long history (e.g. \citet{fix1951discriminatory,cover1968rates,stone1977consistent,devroye1994strong,chaudhuri2014rates}). To our knowledge, the most relevant $k$-NN classification result is by \citet{chaudhuri2014rates}, who show statistical risk bounds under similar assumptions as used in our work. Our analysis shows finite-sample $L_\infty$ bounds on the $k$-NN labels, which is a stronger notion of consistency as it provides a uniform guarantee, rather than an {\it average} guarantee as is shown in previous works under standard risk measures such as $L_2$ error. We do this by leveraging recent techniques developed in \citet{jiang2019non} for $k$-NN regression, which assumes an additive noise model instead of classification. Moreover, we provide to our knowledge the first consistency guarantee for the case where $k$ grows linearly with $n$.

{\bf Deep $k$-NN.} $k$-NN is a classical method in machine learning which has recently been shown to be useful when applied to the intermediate embeddings of a deep neural network \citep{papernot2018deep} to obtain more calibrated and adversarially robust networks. This is because standard distance measures are often better behaved in these representations leading to better performance of $k$-NN on these embeddings than on the raw inputs. \citet{TrustScores} uses nearest neighbors on the intermediate representations to obtain better uncertainty scores than softmax probabilities and \citet{bahri2020deep} uses the $k$-NN label disagreement to filter noisy labels for better training. Like these works, we also leverage $k$-NN on the intermediate representations but we show that utilizing the $k$-NN labels leads to lower prediction churn. 
\section{Algorithm}

\begin{figure*}
    \centering
    \includegraphics[width=\textwidth]{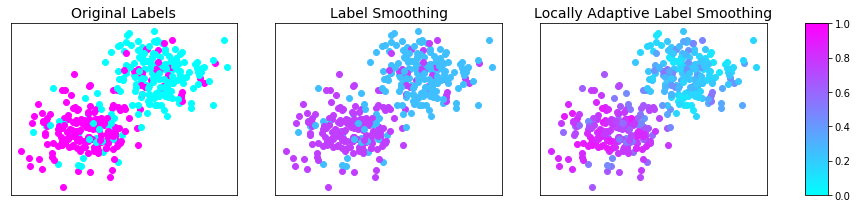}
    \caption{{\bf Visualization of the effects of global vs locally adaptive label smoothing}. This visualization provides intuition for why our locally adaptive label smoothing method can improve neural network training stability. {\bf Left}: A binary classification dataset in $2$ dimensions where there are magenta points for the positive class and cyan points for the negative class. The data is generated from a mixture of two Gaussians, where the bottom-left Gaussian corresponds to the positive examples and the top-right corresponds to the negative examples, and to add label noise, we swap the labels of $10\%$ chosen uniformly. {\bf Middle:} We see that label smoothing simply pushes the labels uniformly closer to the average label ($0.5$). In particular, we see that the noisy labels still remain and thus may still cause conflicting information during training, possibly leading to predictive churn. {\bf Right}: We now show our locally adaptive label smoothing approach, which also smooths the labels based on local information. This alleviates the examples with noisy labels by bringing them more in line with the average label amongst its neighbors and provides a more locally smooth label profile with respect to the input space. Such smoothness can help model training converge in a more stable manner.}
    \label{fig:visualization}
\end{figure*}

Suppose that the task is multi-class classification with $L$ classes and the training datapoints are $(x_1,y_1),...,(x_n,y_n)$, where $x_i \in \mathcal{X}$, and $\mathcal{X}$ is a compact subset of $\mathbb{R}^D$ and $y_i \in \mathbb{R}^L$, represents the one-hot vector encoding of the label -- that is, if the $i$-th example has label $j$, then $y_i$ has $1$ in the $j$-th entry and $0$ everywhere else. We give the formal definition of the smoothed labels:
\begin{definition}[Label Smoothing] Given label smoothing parameter $0 \le a \le 1$, then the smoothed label $y$ is (where ${\bf 1}_L$ denotes the vector of all $1$'s in $\mathbb{R}^L$).
\begin{align*}
    y^{LS}_a := (1 - a) \cdot y + \frac{a}{L} \cdot {\bf 1}_{L}.
\end{align*}
\end{definition}

We next formally define the $k$-NN label, which is the average label of the example's $k$-nearest neighbors in the training set.
Let us use shorthand $X := \{x_1,...,x_n\}$ and $y_i \in \mathbb{R}^L$.
\begin{definition} [$k$-NN label]
Let the $k$-NN radius of $x \in \mathcal{X}$ be $r_k(x) := \inf \{ r : |B(x, r) \cap X| \ge k \}$ where $B(x, r) := \{x' \in \mathcal{X} : |x - x'| \le r \}$ and the $k$-NN set of $x \in \mathcal{X}$ be $N_k(x) := B(x, r_k(x)) \cap X$. 
Then for all $x \in \mathcal{X}$, 
the $k$-NN label is defined as
\begin{align*}
    \eta_k(x) := \frac{1}{|N_k(x)|} \sum_{i=1}^n  y_i\cdot 1\left[ x_i \in N_k(x) \right].
\end{align*}
\end{definition}

The label smoothing method can be seen as performing a global smoothing. That is, every label is equally transformed towards the uniform distribution over all labels. While it seems almost deceptively simple, it has only recently been shown to be effective in practice, specifically for better calibrated networks \cite{muller2019does}. However, since this smoothing technique is applied equally to all datapoints, it fails to incorporate local information about the datapoint. To this end, we propose using the $k$-NN label, which smooths the label across its nearest neighbors. We show theoretically that the $k$-NN label can be a strong proxy for the optimal soft label, that is, the expected label given the features and thus the best prediction one can make given the uncertainty under an $L_2$ risk measure. In other words, compared to the true label (or even the label smoothing), the $k$-NN label is robust to variability in the data distribution and provides a more stable estimate of the label than the original hard label which may be noisy. Training on such noisy labels have been shown to hurt model performance \citep{bahri2020deep} and using the smoothed labels can help mitigate these effects. To this end, we define $k$-NN label smoothing as follows:
\begin{definition}[$k$-NN label smoothing]\label{def:knn_label_smoothing} Let $0 \le a, b \le 1$ be $k$-NN label smoothing parameters. Then the $k$-NN smoothed label of datapoint $(x, y)$ is defined as:
\begin{align*}
    y^{\text{kNN}}_{a, b} = (1 - a) \cdot y + a \cdot \left(b\cdot \frac{1}{L}\cdot {\bf 1}_L + (1 - b)\cdot \eta_k(x)\right).
\end{align*}
\end{definition}
We see that $a$ is used to weight between using the true labels vs. using smoothing, and $b$ is used to weight between the global vs. local smoothing. We provide an illustrative simulation in Figure~\ref{fig:visualization}. Algorithm~\ref{alg:deepknn} shows how $k$-NN label smoothing is applied to deep learning models. Like \citet{bahri2020deep}, we perform $k$-NN on the network's logits layer.

\begin{algorithm}[h]
\caption{Deep $k$-NN locally adaptive label smoothing}
\label{alg:deepknn}
\begin{algorithmic}[1]
   \STATE \textbf{Inputs:} $0 \le a, b \le 1$, $k$, training data $(x_1,y_1),...,(x_n,y_n)$, model training procedure $\mathcal{M}$.
   \STATE Train model $M_0$ on $(x_1,y_1),...,(x_n,y_n)$ with $\mathcal{M}$.
   \STATE Let $z_1,...,z_n \in \mathbb{R}^L$ be the logits of $x_1,...,x_n$, respectively, w.r.t. $M_0$.
   \STATE Let $y^{\text{kNN}}_i$ be the $k$-NN smoothed label (see Definition~\ref{def:knn_label_smoothing}) of $(z_i, y_i)$ computed w.r.t. $(z_1,y_1),...,(z_n,y_n)$.
   \STATE Train model $M$ on $(x_1,y^\text{kNN}_1),...,(x_n,y^\text{kNN}_n)$ with $\mathcal{M}$. 
\end{algorithmic}
\end{algorithm}

\section{Theoretical Analysis}

In this section, we provide theoretical justification for why the $k$-NN labels may be useful. 
In particular, we show results for two settings, where $n$ is the number of datapoints.
\begin{itemize}
    \item When $k \ll n$, we show that with appropriate setting of $k$, the $k$-NN smoothed labels approximate the predictions of the optimal soft classifier at a minimax-optimal rate.
    \item When $k = O(n)$, we show that the distribution implied by the $k$-NN smoothed labels is equivalent to the original distribution convolved with an {\it adaptive} smoothing kernel.
\end{itemize}

Our results may also reveal insights into why distillation methods (the procedure of training a model on another model's predictions instead of the true labels) can work. Another way of considering the result is that the $k$-NN smoothed label is equivalent to the soft prediction of the $k$-NN classifier. Thus, if one were to train on the $k$-NN labels, it would essentially be distillation on the $k$-NN classifier and our theoretical results show that the labels implied by $k$-NN approximate the predictions of the {\it optimal} classifier (in the $k \ll n$ setting). Learning the optimal classifier may indeed be a better goal than learning from the true labels, because the latter may lead to overfitting to the sampling noise rather than just the true signal implied by the optimal classifer.
While distillation is not the topic of this work, our results in this section may be of independent interest to that area.

For the analysis, we assume the binary classification setting, but it is understood that our results can be straightforwardly generalized to the multi-class setting. The feature vectors are defined on compact support $\X \subseteq \R^D$ and datapoints are drawn as follows: the feature vectors are drawn from density $p_\X$ on $\mathcal{X}$ and the labels are drawn according to the label function $\eta : \X \rightarrow [0, 1]$, i.e. $\eta(x) = \P(Y = 1|X=x)$.

\subsection{$k\ll n$}

We make a few mild regularity assumptions for our analysis to hold, which are standard in works analyzing non-parametric methods \cite{singh2009adaptive,chaudhuri2014rates,reeve2019fast,jiang2019non,bahri2020deep}.
The first part ensures that the support $\mathcal{X}$ does not become arbitrarily thin anywhere, the second ensures that the density does not vanish anywhere in the support, and the third ensures that the label function $\eta$ is smooth w.r.t. to its input.
\begin{assumption} \label{assumption}The following three conditions hold:
\begin{itemize}
\item Support Regularity: There exists $\omega > 0$ and $r_0 > 0$ such that $\text{Vol}(\mathcal{X} \cap B(x, r)) \ge \omega \cdot \text{Vol}(B(x, r))$ for all $x \in \mathcal{X}$ and $0 < r < r_0$, where $B(x, r) := \{x' \in \X : |x - x'| \le r\}$.
\item Non-vanishing density: $p_{X, 0} := \inf_{x \in \mathcal{X}} p_X(x) > 0$.
\item Smoothness of $\eta$: There exists $0 < \alpha \le 1$ and $C_\alpha > 0$ such that $|\eta(x) - \eta(x')| \le C_\alpha |x - x'|^\alpha$ for all $x, x' \in \mathcal{X}$. 
\end{itemize}
\end{assumption}

We have the following result which provides a {\it uniform} bound between the smoothed $k$-NN label $\eta_k$ and the optimal soft label $\eta$.
\begin{theorem}\label{theo:knn_bound} Let $0 < \delta < 1$ and suppose that Assumption~\ref{assumption} holds and that $k$ satisfies the following:
\begin{align*}
2^8 \cdot D \log^2(4/\delta) \cdot \log n \le k \le \frac{1}{2} \cdot \omega  \cdot p_{X,0}\cdot v_D \cdot r_0^D \cdot n,
\end{align*}
where $v_D := \frac{\pi^{D/2}}{\Gamma(d/2+1)}$ is the volume of a $D$-dimensional unit ball. Then with probability at least $1- \delta$, we have
\begin{align*}
    \sup_{x\in \mathcal{X}} |\eta_k(x) - \eta(x)| \le &C_\alpha \left( \frac{2k}{\omega \cdot v_D \cdot n \cdot p_{X, 0}}\right)^{\alpha/D} \\
    &+ \sqrt{\frac{2\log(4D/\delta) + 2D\log(n)}{k}}.
\end{align*}
\end{theorem}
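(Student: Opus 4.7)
The plan is to decompose the error into a bias part and a variance part by conditioning on the features. Define the conditional mean
\[
\bar\eta_k(x) := \frac{1}{k}\sum_{i : X_i \in N_k(x)} \eta(X_i),
\]
so that $\eta_k(x) = \frac{1}{k}\sum_{i : X_i \in N_k(x)} Y_i$ and
\[
|\eta_k(x) - \eta(x)| \le |\eta_k(x) - \bar\eta_k(x)| + |\bar\eta_k(x) - \eta(x)|.
\]
The two terms on the right will be matched respectively to the variance $\sqrt{\cdots/k}$ term and the bias $(\cdots)^{\alpha/D}$ term in the theorem.

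For the bias term, the Hölder smoothness assumption on $\eta$ gives $|\bar\eta_k(x) - \eta(x)| \le C_\alpha \, r_k(x)^\alpha$, so it suffices to upper bound the $k$-NN radius $r_k(x)$ uniformly over $x \in \mathcal{X}$. Set $r^*$ so that $\omega \cdot p_{X,0} \cdot v_D \cdot (r^*)^D \cdot n = 2k$; i.e., $r^* = (2k/(\omega\, p_{X,0}\, v_D\, n))^{1/D}$. The assumption on $k$ ensures $r^* \le r_0$, so by support regularity and non-vanishing density, $\mathbb{P}(B(x, r^*) \cap \mathcal{X}) \ge 2k/n$ for every $x \in \mathcal{X}$. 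I would then use a one-sided uniform relative Bernstein bound (or equivalently a VC-type argument on the class of Euclidean balls, whose VC-dimension is $D+1$) to show that with probability at least $1-\delta/2$, every such ball contains at least $k$ sample points, provided $k \gtrsim D \log n + \log(1/\delta)$. This yields $r_k(x) \le r^*$ uniformly, and hence the stated bias bound.

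For the variance term, condition on the features. Given $X_1,\ldots,X_n$, for a fixed $x$ the quantity $\eta_k(x)$ is an average of $k$ independent $[0,1]$-bounded Bernoulli random variables with conditional means $\eta(X_i)$, so Hoeffding's inequality gives $|\eta_k(x) - \bar\eta_k(x)| \le \sqrt{\log(2/\delta')/(2k)}$ with probability $1-\delta'$. To make this uniform in $x$, observe that $\eta_k(x)$ depends on $x$ only through the set $N_k(x)$, and the number of distinct $k$-NN sets as $x$ ranges over $\mathcal{X}$ is controlled by the shatter function of the class of balls in $\mathbb{R}^D$; by Sauer's lemma this is at most $O(n^{D+1}) \le n^{2D}$ for $n$ large. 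Taking $\delta' = \delta/(2\cdot n^{2D})$ in a union bound and simplifying the logarithm produces the $\sqrt{(2\log(4D/\delta) + 2D\log n)/k}$ term (with some slack absorbed into constants).

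Summing the two bounds and applying a union bound over the two high-probability events gives the result. The main technical obstacle is the uniform lower bound on ball counts used in the bias step: a pointwise Chernoff argument is not enough, because $r_k(x)$ is a random radius depending on $x$, so one needs a VC-type uniform concentration over the class of Euclidean balls of radius at most $r^*$. This is also where the $D \log n$ contribution in the lower bound on $k$ originates, and handling the constants carefully (the factors of $2$ inside $r^*$ and inside the log) is what forces the precise numerical form in the statement.
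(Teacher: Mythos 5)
Your proposal is correct and follows essentially the same route as the paper's proof: the same bias--variance decomposition via the triangle inequality and H\"older smoothness, the same uniform bound $r_k(x) \le r^*$ obtained from a VC-type relative concentration bound for balls (the paper invokes Lemma 7 of Chaudhuri--Dasgupta for exactly this), and the same conditional Hoeffding bound made uniform by a union bound over the polynomially many distinct $k$-NN sets (the paper uses the $D n^D$ count from Jiang 2019 where you use Sauer's lemma, which only shifts constants). No gaps; your explicit conditioning on the features in the variance step is, if anything, slightly more careful than the paper's presentation.
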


In other words, there exists constants $C_1, C_2, C$ depending on $\eta$ and $\delta$ such that if $k$ satisfies
\begin{align*}
    C_1 \log n \le k \le C_2 \cdot n,
\end{align*}
then with probability at least $1-\delta$, ignoring logarithmic factors in $n$ and $1/\delta$:
\begin{align*}
    \sup_{x\in \mathcal{X}} |\eta_k(x) - \eta(x)| \le C\cdot \left(\left(\frac{k}{n}\right)^{\alpha/D} + \frac{1}{\sqrt{k}}\right).
\end{align*}
Choosing $k \approx n^{2\alpha/(2\alpha +D)}$, gives us a bound of $\sup_{x\in \mathcal{X}} |\eta_k(x) - \eta(x)| \le \widetilde{O}(n^{-1/(2\alpha+D)})$, which is the minimax optimal rate as established by \citet{tsybakov1997nonparametric}.

Therefore, the advantage of using the smoothed labels $\eta_k(x_1),...,\eta_k(x_n)$ instead of the true labels $y_1,...,y_n$,  is that the smoothed labels approximate the optimal soft classifier. Moreover, as shown above, with appropriate setting of $k$, the smoothed labels are a minimax-optimal estimator of the true label function $\eta$. Thus, the smoothed labels provide as good of a proxy for $\eta$ as any estimator possibly can. 

As suggested earlier, another way of considering this result is that the original labels may contain considerable noise and thus no single label can be guaranteed reliable. Using the smoothed label instead mitigates this effect and allows us to train the model to match the label function $\eta$.

\subsection{$k$ linear in $n$}

In the previous subsection, we showed the utility of $k$-NN label smoothing as a theoretically sound proxy for the optimal soft labels, which attains statistical consistency guarantees as long as $k$ grows faster than $\log n$ and $k/n\rightarrow 0$. Now, we analyze the case where $k$ grows linearly with $n$. In this case, the $k$-NN smoothed labels no longer recover the optimal soft label function $\eta$, but instead an adaptive kernel smoothed version of $\eta$. We make this relationship precise here.

Suppose that $k = \lfloor \beta\cdot n \rfloor$ for some $0 < \beta < 1$. We define the $\beta$-smoothed label function.
\begin{definition}[$\beta$-smoothed label function]
Let $r_\beta(x) := \inf\{r > 0 : \mathcal{P}(B(x, r)) \ge \beta \}$, that is the radii of the smallest ball centered at $x$ with probability mass $\beta$ w.r.t. $P_X$. Then, let $\widetilde{\eta}_\beta(x)$ be the expectation of $\eta$ on $B(x, r_\beta(x))$ w.r.t. $P_X$:
\begin{align*}
    \widetilde{\eta}_\beta(x) := \frac{1}{\beta} \int_{B(x, r_\beta(x))} \eta(x) \cdot P_X(x) dx.
\end{align*}
\end{definition}
We can view $\widetilde{\eta}_\beta$ as an adaptively kernel smoothed version of $\eta$, where adaptivity arises from the density of the point (the more dense, the smaller the bandwidth we smooth it across) and the kernel is based on the density.

We now prove the following result which shows that in this setting $\eta_k$ estimates $\widetilde{\eta}_\beta(x)$. It is worth noting that we need very little assumption on $\eta$ as compared to the previous result because the $\beta$-smoothing of $\eta$ provides a more regular label function; moreover, the rates are fast (i.e. $\widetilde{O}(\sqrt{D/n})$).

\begin{theorem}\label{theo:knn_linear}
Let $0 < \delta < 1$ and $k = \lfloor \beta\cdot n \rfloor$. Then with probability at least $1- \delta$, we have for $n$ sufficiently large depending on $\beta, \delta$:
\begin{align*}
    \sup_{x \in \mathcal{X}} |\eta_k(x) - \widetilde{\eta}_\beta(x)| \le 
   3 \sqrt{\frac{2\log(4D/\delta) + 2D\log(n)}{\beta\cdot n}}.
\end{align*}
\end{theorem}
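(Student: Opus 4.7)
The plan is to decompose $\eta_k(x) - \widetilde\eta_\beta(x)$ into a stochastic piece and a geometric (bias) piece, then control each uniformly in $x$ via a VC-style concentration inequality for the class of Euclidean balls. I would introduce the shorthand
\[
F(x,r) := P_X(B(x,r)), \qquad \hat F(x,r) := \tfrac{1}{n}\sum_i \mathbf{1}\{x_i\in B(x,r)\},
\]
\[
G(x,r) := \int_{B(x,r)} \eta(x')\, p_X(x')\, dx', \qquad \hat G(x,r) := \tfrac{1}{n}\sum_i y_i\,\mathbf{1}\{x_i\in B(x,r)\},
\]
so that $\widetilde\eta_\beta(x) = G(x, r_\beta(x))/\beta$ while $\eta_k(x) = (n/k)\,\hat G(x, r_k(x))$ with $\hat F(x, r_k(x)) = k/n$ (modulo the minor technicality of ties, which perturb things by $O(1/n)$).

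\textbf{Key probabilistic input.} Next I would invoke a relative/multiplicative VC inequality for balls in $\R^D$ (VC dimension $O(D)$; the same tool used to prove Theorem~\ref{theo:knn_bound}). Setting $c_n := \tfrac{2\log(4D/\delta) + 2D\log n}{n}$, this gives, with probability at least $1-\delta$ and uniformly over $x, r$,
\[
|\hat F(x,r) - F(x,r)| \le \sqrt{F(x,r)\, c_n} + c_n, \qquad |\hat G(x,r) - G(x,r)| \le \sqrt{G(x,r)\, c_n} + c_n.
\]
Writing $\tau := \sqrt{c_n/\beta}$, the target bound of the theorem is exactly $3\tau$. The appearance of $\beta$ in the denominator is produced by the fact that both $F$ and $G$ have mass of order $\beta$ on the relevant balls, so the absolute VC error $\sqrt{\beta c_n}$ becomes a relative error $\tau$ after dividing by $\beta$.

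\textbf{Decomposition and bounding.} I would split
\[
\eta_k(x) - \widetilde\eta_\beta(x) = \underbrace{\tfrac{n}{k}\bigl[\hat G(x, r_k(x)) - G(x, r_k(x))\bigr]}_{(\mathrm I)} + \underbrace{\tfrac{n}{k}\bigl[G(x, r_k(x)) - G(x, r_\beta(x))\bigr]}_{(\mathrm{II})} + \underbrace{\bigl(\tfrac{n}{k} - \tfrac{1}{\beta}\bigr) G(x, r_\beta(x))}_{(\mathrm{III})}.
\]
Term $(\mathrm{III})$ is $O(1/(\beta n))$ since $|n/k - 1/\beta| \le 1/(\beta k)$ and $G \le \beta$. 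For $(\mathrm I)$, inverting the $\hat F$-concentration at $r_k(x)$ (using $\hat F(x,r_k(x)) = k/n$) pins $F(x, r_k(x)) = \beta + O(\sqrt{\beta c_n})$, so $G(x, r_k(x)) \le F(x, r_k(x))$ is of order $\beta$, and the $\hat G$-bound gives $|(\mathrm I)| \le (n/k)(\sqrt{\beta c_n} + c_n) \le \tau + O(c_n/\beta)$. For $(\mathrm{II})$, since $\eta \in [0,1]$ I can dominate $|G(x, r_k(x)) - G(x, r_\beta(x))| \le |F(x, r_k(x)) - F(x, r_\beta(x))| \le O(\sqrt{\beta c_n})$ by the same inversion, and multiplying by $n/k \approx 1/\beta$ yields $|(\mathrm{II})| \le \tau + O(c_n/\beta)$. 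Summing the three pieces gives $\le 3\tau$, with the $O(c_n/\beta)$ residues absorbed into the constant $3$ once $n$ is taken large enough depending on $\beta$ and $\delta$.

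\textbf{Main obstacle.} The hard part is Term $(\mathrm{II})$: $r_k(x)$ is defined by a non-strict infimum, so $\hat F(x,\cdot)$ has a jump at $r_k(x)$ and $r_k(x)$ is itself random. I would handle this by a monotonicity argument, sandwiching $F(x, r_k(x))$ into $[\beta - \sqrt{\beta c_n} - c_n,\ \beta + \sqrt{\beta c_n} + c_n]$ by comparing $\hat F(x, r_k(x)) = k/n$ and $\hat F(x, r_\beta(x))$ to their population counterparts via the uniform bound, and using monotonicity of $\hat F$ in $r$. This sandwich is what drives the scales in both $(\mathrm I)$ and $(\mathrm{II})$; the tie/rounding subtleties (from $\lfloor \beta n \rfloor$ vs.\ $\beta n$ and from $|N_k(x)|$ possibly exceeding $k$) contribute only $O(1/n)$, which is again absorbed into the constant for large $n$.
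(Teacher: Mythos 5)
Your proposal is correct and arrives at the stated bound, but it takes a genuinely different route from the paper's proof. The paper pivots through the intermediate quantity $\eta_{k'(x)}(x)$ with $k'(x) := |X \cap B(x, r_\beta(x))|$: it first compares the $k$-NN average to the empirical label average over the \emph{population}-radius ball, and then compares that average to $\widetilde{\eta}_\beta(x)$, handling each step with Hoeffding's inequality made uniform over the $D\cdot n^D$ distinct ball--sample intersections (Lemma~\ref{knncount}). You instead pivot through the population label-masses $G(x, r_k(x))$ and $G(x, r_\beta(x))$, giving a three-term decomposition controlled by relative (Bernstein/VC-type) deviation bounds for balls. The two arguments rest on the same pillars --- uniform control of the empirical measure of balls and concentration of label averages, both made uniform via the polynomial count of distinct intersections --- so the difference is largely one of bookkeeping; your version is slightly heavier on notation but cleaner about the tie and rounding issues that the paper glosses over. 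One substantive thing your route buys: the relative form $|\hat F - F| \le \sqrt{F c_n} + c_n$ is exactly what produces the $1/\sqrt{\beta}$ dependence in the radius-mismatch term. The paper handles that term by applying plain Hoeffding to $|k'(x) - k|$, which yields an absolute deviation of order $\sqrt{n\, c_n}$ and hence, after dividing by $k \approx \beta n$, a contribution of order $\sqrt{c_n/\beta^{2}}$ rather than the $\sqrt{c_n/\beta}$ it asserts; your relative bound repairs this gap. In a full write-up you should make two points explicit: that the deviation bound for $\hat G$ is over the VC class of sets $B \times \{1\}$ (legitimate since $y \in \{0,1\}$ in the binary setting of the analysis, with uniformity again supplied by the counting lemma), and that evaluating the uniform bounds at the random radius $r = r_k(x)$ is licensed precisely because they hold simultaneously over all $(x, r)$.
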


\section{Experiments}

We now describe the experimental methodology and results for validating our proposed method.
\subsection{Baselines}
We start by detailing the suite of baselines we compare against. We tune baseline hyper-parameters extensively, with the precise sweeps and setups available in the Appendix.
\begin{itemize}
    \item {\bf Control}: Baseline where we train for accuracy with no regards to churn.
    \item {\bf $\ell_p$ Regularization}: We control the stability of a model's predictions by simply regularizing them (independently of the ground truth label) using classical $\ell_p$ regularization. The loss function is given by:
    \begin{align*}
     \mathcal{L}_{\ell_p}(x_i, y_i) = \mathcal{L}(x_i, y_i) + a ||f(x_i)||_p.
     \end{align*}
     We experiment with both $\ell_1$ and $\ell_2$ regularization.
    \item {\bf Bi-tempered}: This is a baseline by \citet{amid2019robust}, originally designed for robustness to label noise. It modifies the standard logistic loss function by introducing two temperature scaling parameters $t_1$ and $t_2$. We apply their ``bi-tempered'' loss here, suspecting that methods which make model training more robust to noisy labels may also be effective at reducing churn.
    \item {\bf Anchor}: This is based on a method proposed by \citet{fard2016launch} specifically for churn reduction. It uses the predicted probabilities from a preliminary model to smooth the training labels of the second model. We first train a preliminary model $f_\text{prelim}$ using regular cross-entropy loss. We then retrain the model using smoothed labels $(1-a)y_i + af_\text{prelim}(x_i)$, thus  ``anchoring'' on a preliminary model's predictions. In our experiments, we train one preliminary model and fix it across the runs for this baseline to reduce prediction churn.
    \item {\bf Co-distillation}: We use the co-distillation approach presented by \citet{anil2018large}, who touched upon its utility for churn reduction. We train two identical models $M_1$ and $M_2$ (but subject to different random initialization) in tandem while penalizing divergence between their predictions. The overall loss is
\begin{align*}
\mathcal{L}_\text{codistill}(x_i, y_i) =\mathcal{L}(f_1(x_i), y_i) &+ \mathcal{L}(f_2(x_i), y_i) \\
&+ a \Psi(f_1(x_i), f_2(x_i)).
\end{align*}
In their paper the authors set $\Psi$ to be cross-entropy:
\begin{align*}
\Psi(p^{(1)}, p^{(2)}) = \sum_{i\in[L]}p^{(1)}_i\log(p^{(2)}_i),
\end{align*}
but they note KL divergence can be used. We experiment with both cross-entropy and KL divergence. We also tune $n_\text{warm}$, the number of burn-in steps of training before turning on the regularizer.
    \item {\bf Label Smoothing}: This is the method of \citet{szegedy2016rethinking} defined earlier in the paper.
Our proposed method augments global label smoothing by leveraging the local $k$-NN estimates. Naturally, we compare against doing global smoothing only and this serves as a key ablation model to see the added benefits of leveraging the $k$-NN labels.
\item {\bf Mixup}: This method proposed by \citet{zhang2017mixup} generates synthetic training examples on the fly by convex-combining random training inputs and their associated labels, where the combination weights are random draws from a $\text{Beta}(a,a)$ distribution. Mixup improves generalization, increases robustness to adversarial examples as well as label noise, and also improves model calibration~\citep{thulasidasan2019combating}.
\item {\bf {Ensemble}}: Ensembling deep neural networks can improve the quality of their uncertainty estimation \citep{lakshminarayanan2017simple,fort2019deep}. We consider the simple case where $m$ identical deep neural networks are trained independently on the same training data, and at inference time, their predictions are uniformly averaged together.
\end{itemize}

\subsection{Datasets and Models.}
We do not use standard data augmentation strategies for the image datasets so that the training data is constant across different training rounds. For all datasets we use the Adam optimizer with default learning rate $0.001$. We use a minibatch size of $128$ throughout.
\begin{itemize}
    \item {\bf MNIST}: We train a three-layer MLP with 256 hidden units and ReLU activations for 20 epochs.
    \item {\bf Fashion MNIST}: We use the same architecture as the one used for MNIST.
    \item {\bf SVHN}: We use LeNet5 CNN~\citep{lecun1998gradient} for $30$ epochs on the Google Street View Housing Numbers (SVHN) dataset, where each image is cropped to be $32\times32$ pixels.
    \item {\bf CelebA}: CelebA~\citep{liu2018large} is a large-scale face attributes dataset with more than $200$k celebrity images, each with 40 attribute annotations. We use the standard train and test splits, which consist of $162770$ and $19962$ images respectively. Images were resized to be $28\times28\times3$. We select the ``smiling'' and ``high cheekbone'' attributes and perform binary classification, training LeNet5 for $20$ epochs.
    \item {\bf Phishing}: To validate our method beyond the image classification setting, we train a three-layer MLP with $256$ hidden units per layer on UCI Phishing dataset~\citep{Dua:2019}, which consists of $7406$ train and $3649$ test examples on a $30$-dimensional input feature. 

\end{itemize}

\begin{figure}
    \centering
    \includegraphics[width=0.45\textwidth]{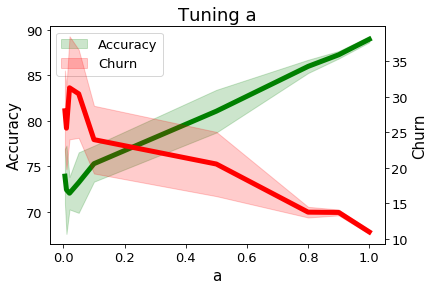}
    \includegraphics[width=0.47\textwidth]{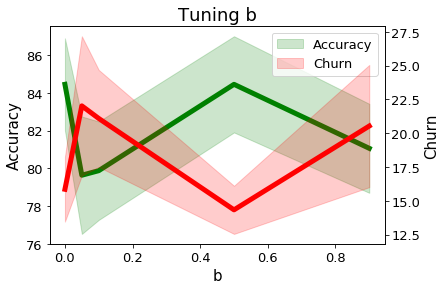}
    \includegraphics[width=0.5\textwidth]{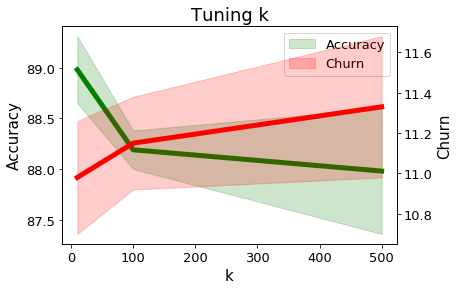}
    \caption{{\bf Performance across hyperparameters for SVHN}. For each of the three hyperparameters of our method ($a$, $b$, and $k$), we show the performance across a range of this hyperparameter keeping the other two hyperparameters fixed. Standard error bands are shaded. {\bf Top}: We tune $a$ while fixing $k = 10$ and $b = 0.9$. We see better performance under both accuracy and churn as $a$ increases, which suggests that the less weight we put on the original label, the better. {\bf Middle}: We tune $b$ while fixing $k=10$ and $a=0.5$. We don't see any clear pattern which suggests that $b$ is an essential hyperparameter trading off the locally adaptive vs global smoothing-- this suggests that our adding the locally adaptive component to the label smoothing is indeed having an effect on performance. {\bf Bottom}: We tune $k$ while fixing $a=1$ and $b=0.9$. We see that the accuracy and churn have little differences across a wide range of $k$ from $k=10$ to $k=500$, which suggests that $k$ is not an essential hyperparameter and that we are stable in it.}
    \label{fig:hyperparameter}
\end{figure}

\begin{table*}[]
\centering
\small
\begin{tabular}{l|l|l|l|l|l}
\hline
Dataset       & Method                                                                    & Accuracy \% & Churn \% & Churn Correct & Churn Incorrect  \\ \hline
              & \begin{tabular}[c]{@{}l@{}}$k$-NN LS (k=10, a=1, b=0.9)\end{tabular} & 88.98 (0.33)      & 10.98 (0.28)   & 4.64 (0.29)            & 62.23 (1.22)             \\
              & Label Smoothing (a=0.9)                                                      & 87.26 (0.73)      & 13.46 (0.62)   & 5.31 (0.57)            & 67.2 (1.44)              \\
              & Anchor (a=1.0)                                                 & 87.17 (0.16)      & 12.48 (0.39)   & 5.19 (0.2)             & 61.66 (1.85)             \\
SVHN          & $\ell_2$ Reg (a=0.5)                                                          & 88.16 (0.35)      & 11.85 (0.35)   & 5.07 (0.16)            & 62.73 (2.1)              \\
              & $\ell_1$ Reg (a=0.2)                                                          & 74.18 (3.41)      & 22.89 (3.74)   & 9.58 (4.04)            & 59.36 (5.7)              \\
              & Co-distill (CE, a=0.5)                                                       & 87.64 (0.64)      & 12.46 (0.48)   & 5.16 (0.51)            & 63.82 (1.67)             \\
              & Co-distill (KL, a=0.5)                                                       & 87.52 (0.45)      & 13.01 (0.3)    & 5.54 (0.33)            & 65.44 (1.46)             \\
              & Bi-tempered ($t_1$=0.5, $t_2$=1)                                  & 88.04 (0.5)       & 12.03 (0.3)    & 5.26 (0.3)             & 62.48 (1.83)             \\
              & Mixup (a=0.5)                                  & {\bf89.08 (0.18)}       & \bf{9.56 (0.16)}    & {\bf4.07 (0.15)}             & 54.75 (0.95)             \\
              & Control                                                                   & 86.64 (0.54)      & 14.64 (0.51)   & 6.03 (0.5)             & 69.59 (1.32)             \\ \hline
              & $k$-NN LS (k=5, a=0.9, b=0.9)                             & {\bf 98.23 (0.11)}      & {\bf 1.52 (0.12)}    & {\bf 0.7 (0.1)}              & 47.16 (3.39)             \\
              & Label Smoothing (a=0.9)                                                        & 98.15 (0.07)      & 1.65 (0.05)    & 0.71 (0.07)            & 50.73 (2.62)             \\
              & Anchor (a=1.0)                                                              & 97.72 (0.11)      & 2.66 (0.2)     & 1.21 (0.14)            & 64.51 (4.13)             \\
MNIST         & $\ell_2$ Reg (a=0.5)                                                              & 98.08 (0.1)       & 1.67 (0.12)    & 0.8 (0.08)             & 46.65 (3.2)              \\
              & $\ell_1$ Reg (a=0.01)                                                             & 97.67 (0.29)      & 2.51 (0.31)    & 1.3 (0.27)             & 56.8 (2.84)              \\
              & Co-distill (CE, a=0.2, $n_\text{warm}$=2k)                                          & 98.08 (0.06)      & 2.08 (0.11)    & 0.98 (0.07)            & 58.6 (3.91)              \\
              & Co-distill (KL, a=0.05, $n_\text{warm}$=1k)                                         & 97.98 (0.14)      & 2.16 (0.16)    & 0.97 (0.13)            & 59.56 (3.64)             \\
              & Bi-tempered ($t_1$=0.9, $t_2$=1.0)                                 & 98.09 (0.2)       & 2.04 (0.15)    & 1.07 (0.14)            & 55.82 (4.32)             \\
              & Mixup (a=0.2)                                  & 98.17 (0.04)       & 1.59 (0.07)    & 0.74 (0.04)             & 47.8 (2.53)             \\
              & Control                                                                   & 97.98 (0.13)      & 2.28 (0.13)    & 0.96 (0.07)            & 63.36 (2.55)             \\ \hline
              & $k$-NN LS (k=10, a=1, b=0.5)                              & 88.89 (0.14)      & 6.94 (0.18)    & 3.27 (0.15)            & 36.26 (1.09)             \\
              & Label Smoothing (a=0.8)                                                        & 88.46 (0.17)      & 7.2 (0.46)     & 3.32 (0.28)            & 36.63 (2.02)             \\
              & Anchor (a=0.9)                                                                    & 88.55 (0.14)      & 7.53 (0.45)    & 3.6 (0.23)             & 37.78 (2.29)             \\
Fashion  & $\ell_2$ Reg (a=0.5)                                                               & 88.52 (0.19)      & 7.86 (0.36)    & 3.59 (0.18)            & 40.38 (1.81)             \\
    MNIST          & $\ell_1$ Reg (a=0.1)                                                              & 86.88 (0.35)      & 8.24 (0.55)    & 3.88 (0.41)            & 36.81 (2.63)             \\
              & Co-distill (CE, a=0.5, $n_\text{warm}$=2k)                                          & 88.76 (0.21)      & 7.51 (0.39)    & 3.67 (0.3)             & 37.98 (1.71)             \\
              & Co-distill (KL, a=0.5, $n_\text{warm}$=2k)                                          & 88.85 (0.35)      & 7.83 (0.43)    & 3.68 (0.29)            & 40.59 (2.4)              \\
              & Bi-tempered ($t_1$=0.7, $t_2$=2)                                                 & 88.7 (0.29)       & 7.36 (0.47)    & 3.5 (0.19)             & 37.24 (3.04)             \\
              & Mixup (a=0.4)                                  & {\bf89.17 (0.10)}       & {\bf6.77 (0.29)}    & {\bf3.23 (0.15)}             & 35.97 (1.43)             \\
              & Control                                                                   & 88.95 (0.26)      & 9.13 (0.51)    & 4.42 (0.4)             & 46.99 (2.49)             \\ \hline
                      & $k$-NN LS (k=100, b=0.1, a=0.9) & {\bf 90.02 (0.11)} & {\bf 5.46 (0.32)}  & {\bf 2.97 (0.18)}      & 27.71 (1.74)       \\
                      & Label Smoothing (a=0.05)                             & 89.39 (0.29) & 6.77 (0.41)  & 3.81 (0.26)      & 31.67 (2.34)       \\
                      & Anchor (a=0.8)                                    & 89.87 (0.14) & 5.57 (0.28)  & 3.07 (0.21)      & 27.66 (1.38)       \\
CelebA        & $\ell_2$ Reg (a=0.01)                                   & 89.35 (0.16) & 6.85 (0.34)  & 3.92 (0.27)      & 31.62 (1.21)       \\
  Smiling       & $\ell_1$ Reg (a=0.5)                                    & 89.39 (0.26) & 6.71 (0.26)  & 3.61 (0.22)      & 32.48 (1.35)       \\
                      & Co-distill (CE, a=0.5, $n_\text{warm}$=1k)                & 89.59 (0.29) & 6.31 (0.23)  & 3.66 (0.3)       & 29.47 (1.47)       \\
                      & Co-distill (KL, a=0.5, $n_\text{warm}$=2k)                & 89.57 (0.22) & 6.1 (0.23)   & 3.34 (0.26)      & 29.66 (1.47)       \\
                      & Bi-tempered ($t_1$=0.9, $t_2$=2.)       & 89.88 (0.18) & 6.44 (0.31)  & 3.56 (0.19)      & 31.96 (1.96)       \\
              & Mixup (a=0.2)                                  & 89.71 (0.14)       & 6.15 (0.12)    & 3.51 (0.12)             & 29.37 (0.66)             \\
                      & Control                                         & 89.67 (0.19) & 7.3 (0.45)   & 4.06 (0.27)      & 35.34 (2.35)       \\ \hline
                      & $k$-NN LS (k=100, b=0.1, a=0.9) & {\bf 84.48 (0.21)} & {\bf 7.7 (0.29)}   & {\bf 4.64 (0.27)}      & 24.44 (0.98)       \\
                      & Label Smoothing (a=0.005)                            & 83.73 (0.17) & 8.68 (0.46)  & 5.2 (0.35)       & 26.61 (1.28)       \\
                      & Anchor (a=0.9)                                    & 84.48 (0.2)  & 7.97 (0.39)  & 4.77 (0.22)      & 25.44 (1.58)       \\
CelebA  & $\ell_2$ Reg (a=0.001)                                  & 83.6 (0.14)  & 9.06 (0.32)  & 5.41 (0.24)      & 27.66 (1.03)       \\
     High                  & $\ell_1$ Reg (a=0.01)                                   & 83.59 (0.26) & 8.43 (0.23)  & 4.93 (0.23)      & 26.14 (1.16)       \\
     Cheekbone                 & Co-distill (CE, a=0.5, $n_\text{warm}$=1k)                & 84.08 (0.21) & 8.96 (0.37)  & 5.33 (0.36)      & 28.11 (0.88)       \\
                      & Co-distill (KL, a=0.5, $n_\text{warm}$=1k)                & 84.31 (0.08) & 8.57 (0.16)  & 5.06 (0.13)      & 27.39 (0.47)       \\
                      & Bi-tempered ($t_1$=0.5, $t_2$=4)       & 83.92 (0.13) & 7.84 (0.32)  & 4.75 (0.21)      & 24.01 (1)          \\
              & Mixup (a=0.4)                                  & 84.53 (0.14)       & 7.92 (0.47)    & 4.69 (0.31)             & 25.53 (1.54)             \\
                      & Control                                         & 83.93 (0.56) & 10.18 (0.93) & 6.2 (0.89)       & 31.1 (2.22)        \\ \hline
                      & $k$-NN LS (k=500, a=0.8, b=0.9) & {\bf 96.69 (0.09)} & {\bf 1.04 (0.21)}  & {\bf 0.54 (0.14)}      & 15.81 (3.52)       \\
                      & Label Smoothing (a=0.8)                              & 96.63 (0.09) & 1.26 (0.26)  & 0.64 (0.17)      & 18.8 (3.42)        \\
                      & Anchor (a=0.9)                                    & 96.02 (0.25) & 2.33 (0.25)  & 1.08 (0.15)      & 31.58 (5.11)       \\
 Phishing          & $\ell_2$ Reg (a=0.5)                                    & 96.51 (0.12) & 1.35 (0.3)   & 0.7 (0.21)       & 19.37 (4)          \\
                      & $\ell_1$ Reg (a=0.5)                                    & 95.38 (0.18) & 1.48 (0.34)  & 0.83 (0.24)      & 14.95 (4.08)       \\
                      & Co-distill (CE, a=0.2, $n_\text{warm}$=2)                & 96.02 (0.19) & 1.45 (0.26)  & 0.83 (0.21)      & 16.72 (4.13)       \\
                      & Co-distill (KL, a=0.001, $n_\text{warm}$=1k)              & 95.94 (0.33) & 1.51 (0.2)   & 0.65 (0.18)      & 20.95 (6.14)       \\
                      & Bi-tempered ($t_1$=0.9, $t_2$=1.0)                     & 96.26 (0.37) & 2.32 (0.69)  & 1.23 (0.53)      & 30.19 (8.51)       \\
              & Mixup (a=0.1)                                  & 96.22 (0.23)       & 1.80 (0.33)    & 1.05 (0.28)             & 21.53 (4.25)             \\
                      & Control                                         & 96.3 (0.32)  & 2.25 (0.59)  & 1.21 (0.38)      & 29.05 (7.93)       \\ \hline
\end{tabular}
\caption{Results across all datasets and baselines under optimal hyperparameter tuning (settings shown). Note that we report the standard deviation of the runs instead of standard deviation of the mean (i.e. standard error) which is often reported instead. The former is higher than the latter by a factor of the square root of the number of trials (10).}

\label{table:main}
\end{table*}

\begin{figure*}[!t]
\begin{tabular}{cc}
  \includegraphics[width=0.45\textwidth]{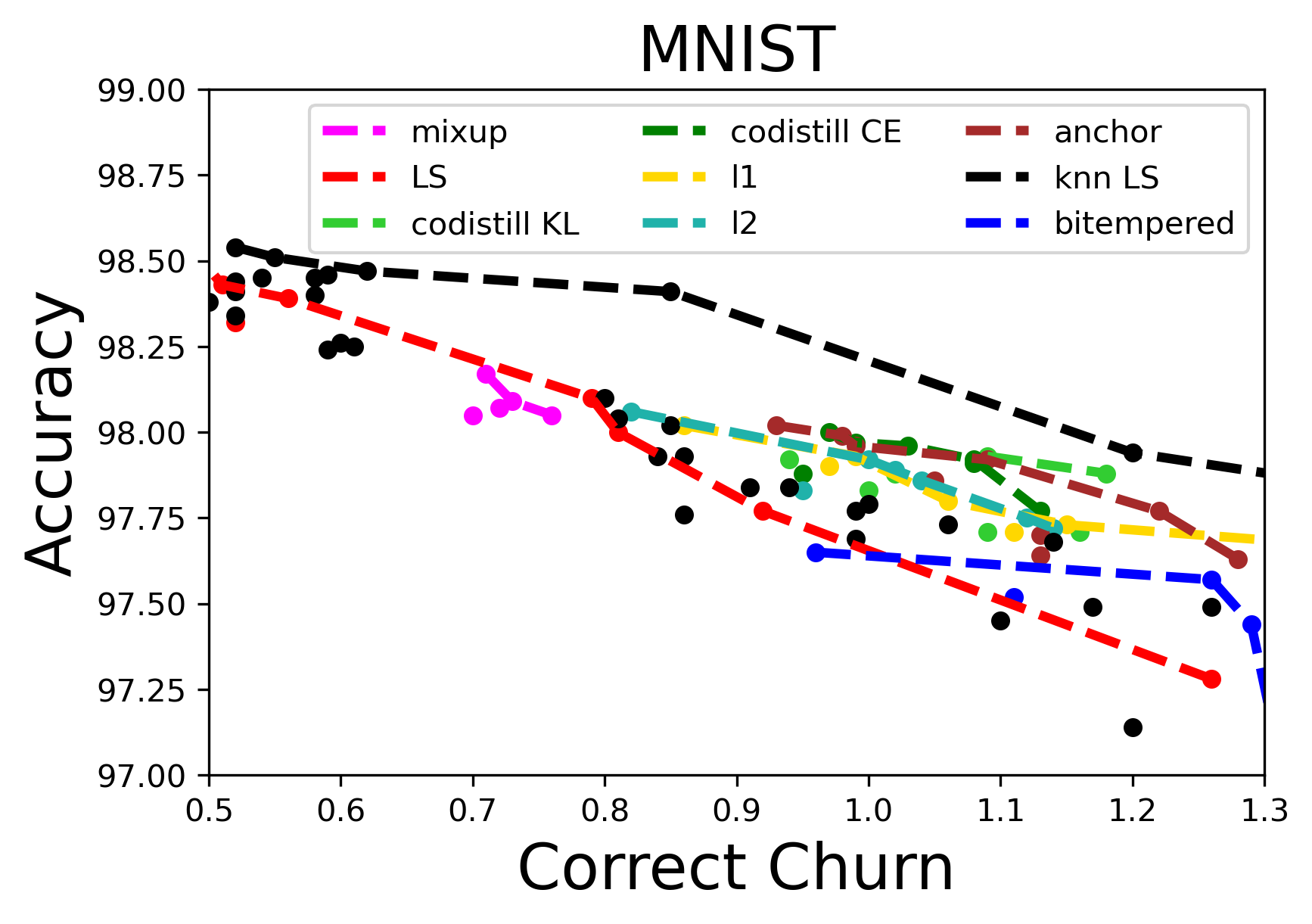} & 
  \includegraphics[width=0.45\textwidth]{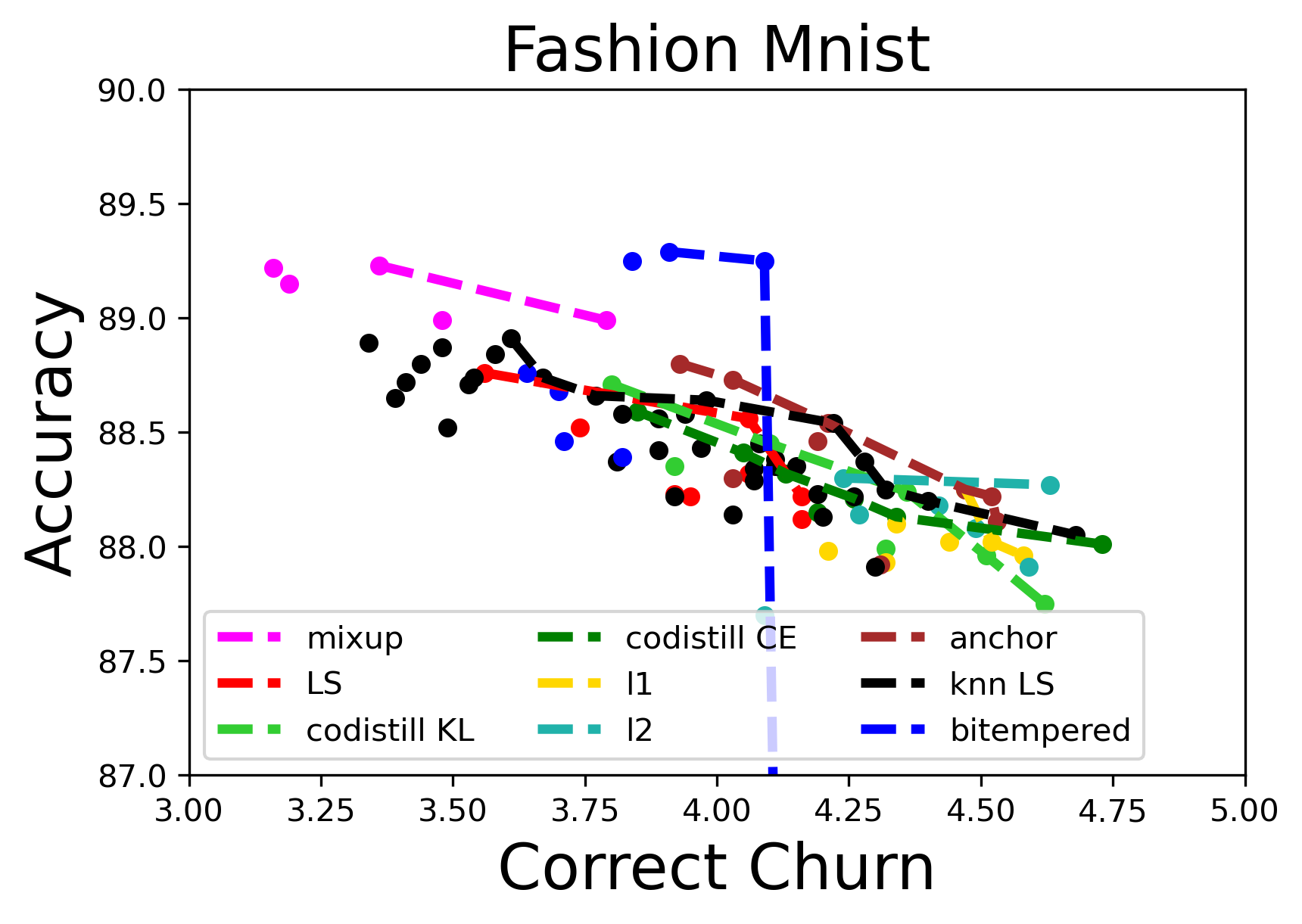} \\ 
  \includegraphics[width=0.45\textwidth]{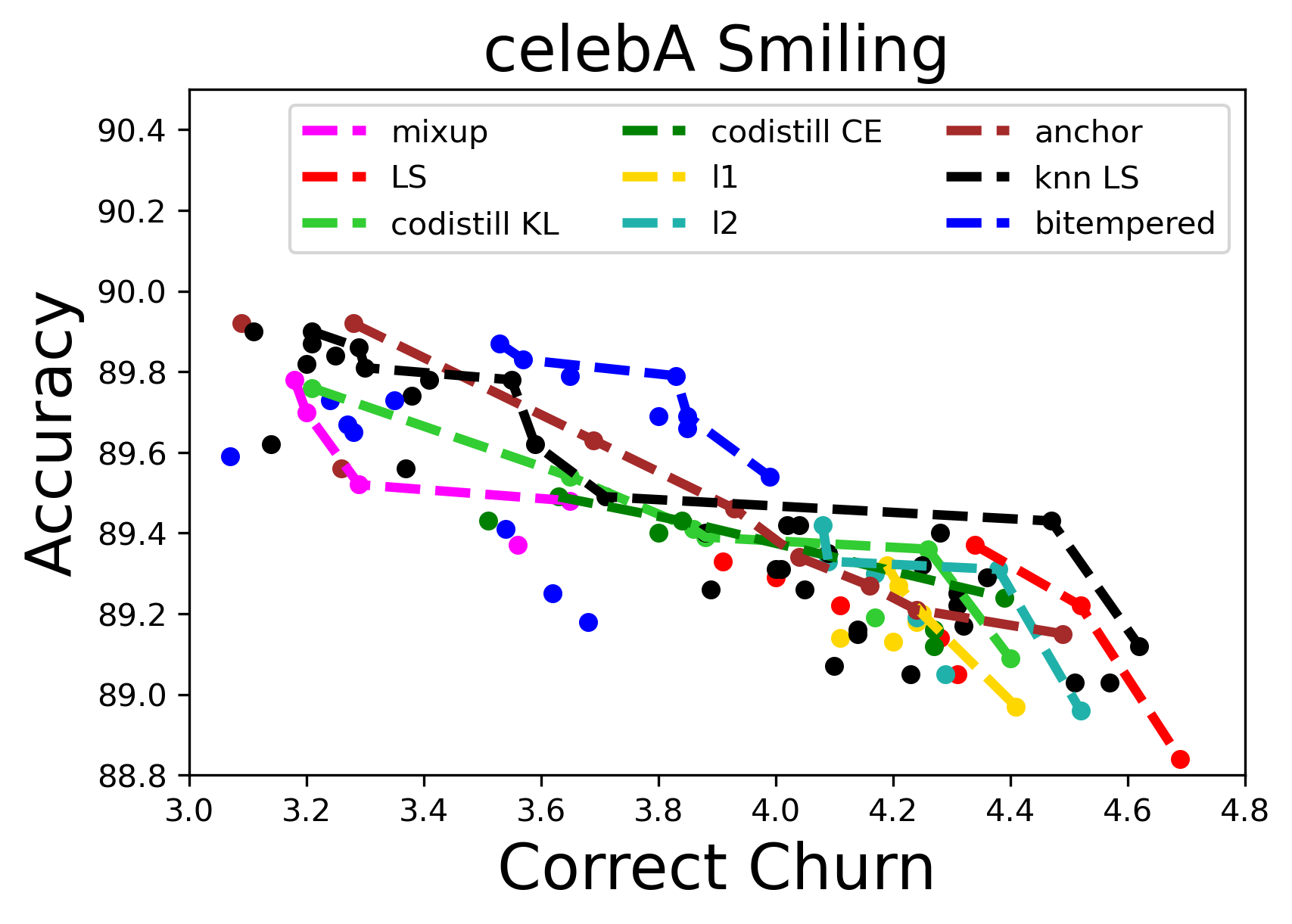} &
  \includegraphics[width=0.45\textwidth]{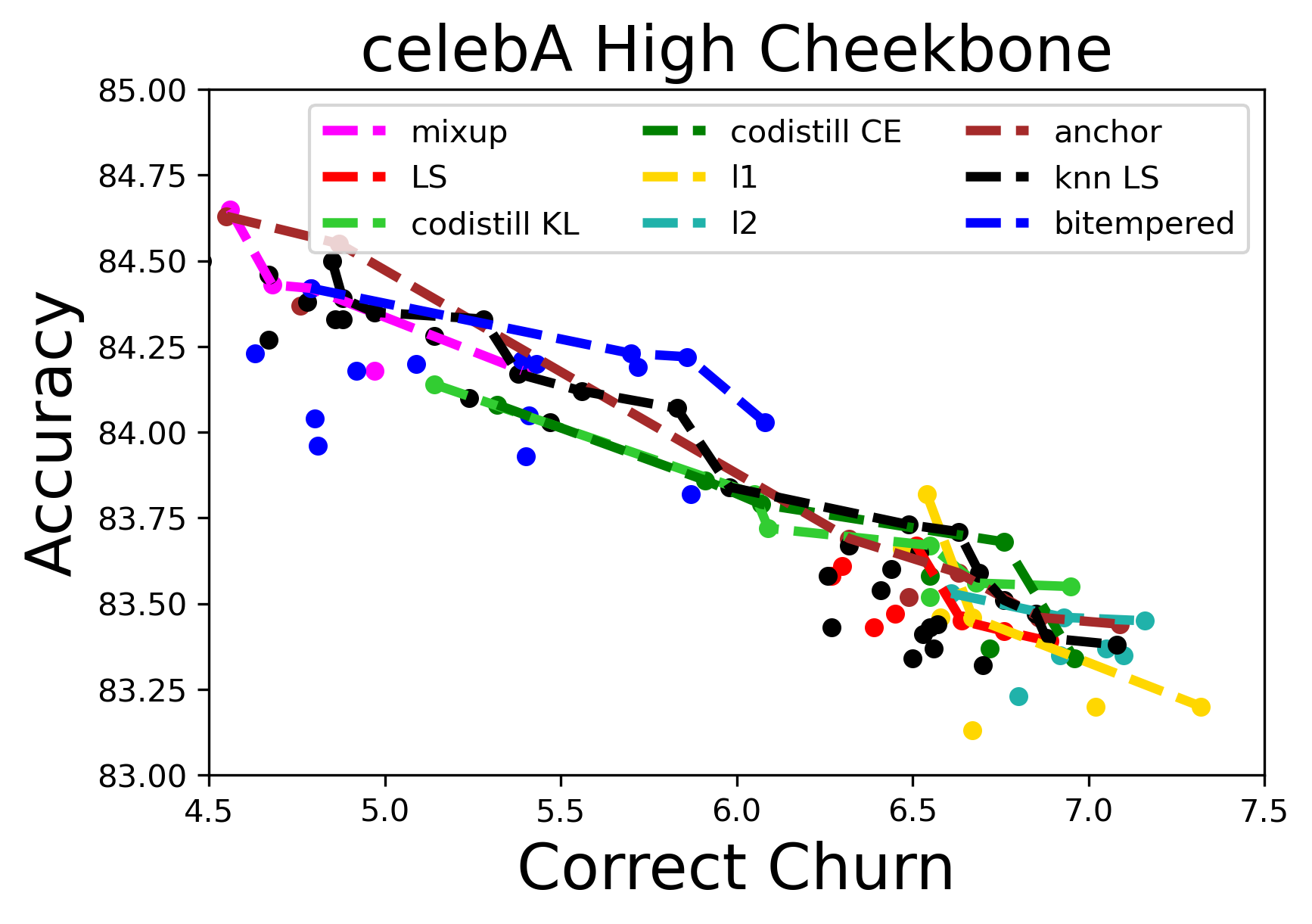} \\
  \includegraphics[width=0.45\textwidth]{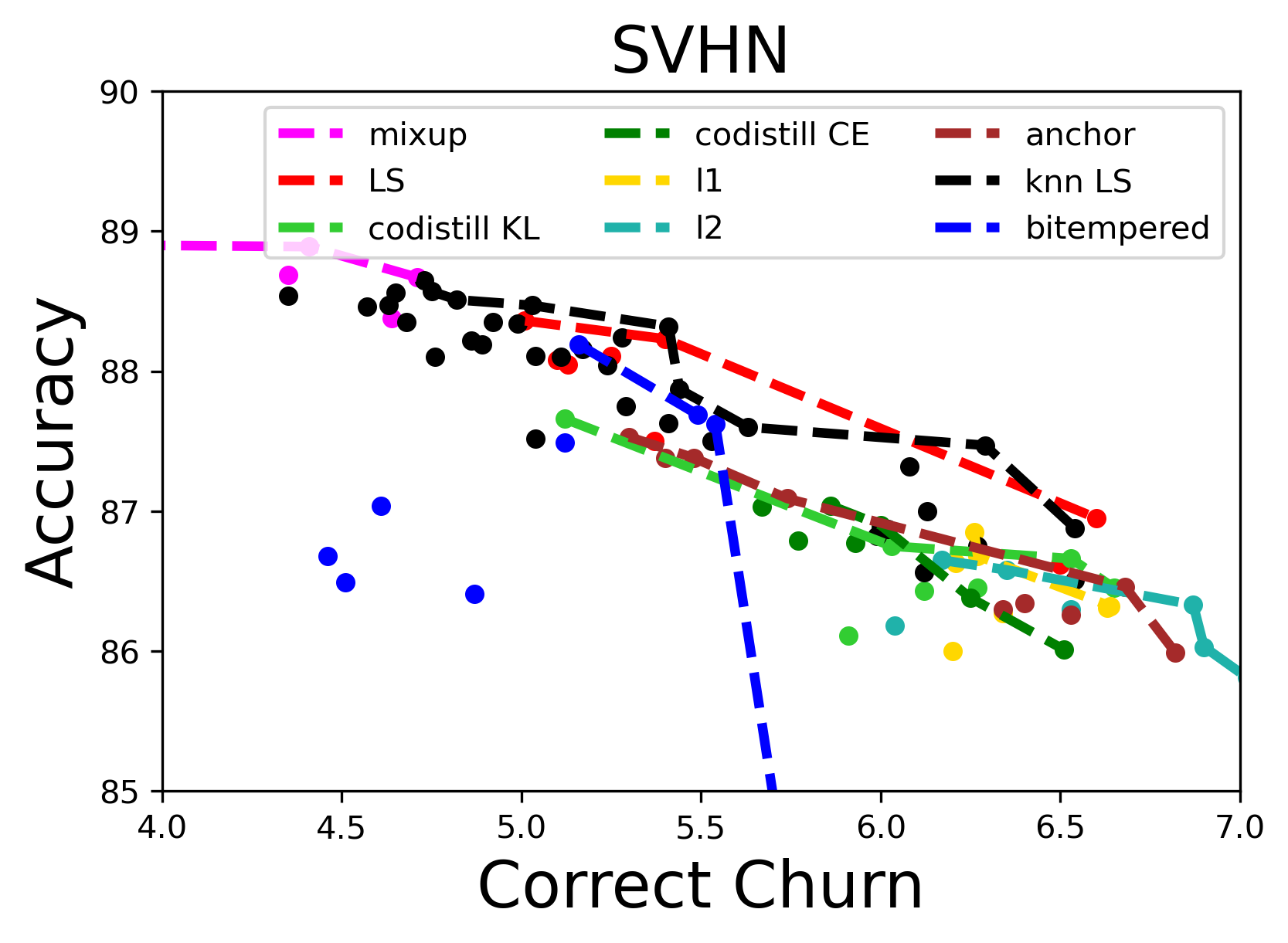} & 
  \includegraphics[width=0.45\textwidth]{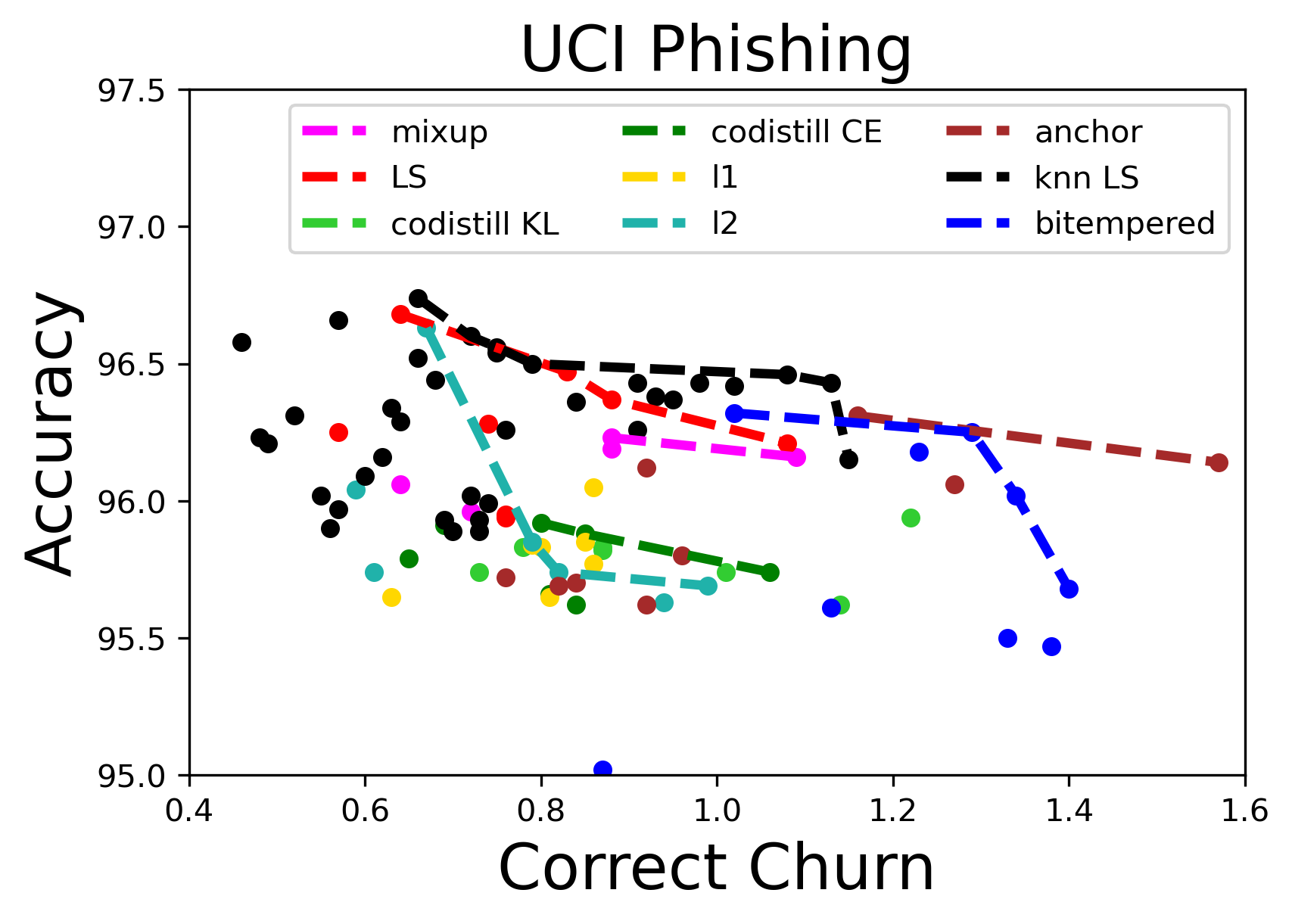} \\
\end{tabular}
\caption{\textbf{Accuracy vs. correct churn plots.} This figure plots accuracy against correct churn achieved by various hyperparameter settings of each method. The top-left corner (high accuracy, low churn) is most desirable. The dotted lines depict the Pareto frontier for each method, and we zoom in for a detailed view. We fix $k$-NN LS $k = 100$ and the number of codistillation warmup steps to $1000$. All other hyperparameters are sweeped as specified in the appendix. We see that $k$-NN label smoothing ranks reasonably high, mostly achieving low churn without sacrificing accuracy.}
\label{fig:pareto}
\end{figure*}

\subsection{Evaluation Metrics and Hyperparameter Tuning}
For each dataset, baseline and hyper-parameter setting, we run each method on the same train and test split exactly 5 times. We then report the average test accuracy as well as the test set churn averaged across every possible pair $(i, j)$ of runs (10 total pairs). To give a more complete picture of the sources of churn, we also slice the churn by the whether or not the test predictions of the first run in the pair were correct. Then, lowering the churn on the correct predictions is desirable (i.e. if the base model is correct, we clearly don't want the predictions to change), while churn reduction on incorrect predictions is less relevant (i.e. if the base model was incorrect, then it may be better for there to be higher churn-- however at the same time, some examples may be inherently difficult to classify or the label is such an outlier that we don't expect an optimal model to correctly classify, in which case lower churn may be desirable). This is why in the results for Table~\ref{table:main}, we bold the best performing baseline for churn on correct examples, but not for churn on incorrect examples.

In the results (Table~\ref{table:main}), for each dataset and baseline, we chose the optimal hyperparameter setting by first sorting by accuracy and choosing the setting with the highest accuracy, and if there were multiple settings with very close to the top accuracy (defined as within less than $0.1\%$ difference in test accuracy), then we chose the setting with the lowest churn among those settings with accuracy close to the top accuracy. There is often no principled way to trade-off the two sometimes competing objectives of accuracy and churn. \citet{cotter2019optimization} offer a heuristic to trade off the two objectives in a more balanced manner on the Pareto frontier. However in this case, biasing towards higher accuracy is most realistic because in practice, when given a choice between two models, it's usually best to go with the more accurate model. Fortunately, we will see that accuracy and churn are not necessarily competing objectives and our proposed method usually gives the best result for both simultaneously.

\subsection{Performance across hyperparameter settings}

In Figure~\ref{fig:hyperparameter}, we show the performance on SVHN w.r.t. the hyperparameters for both accuracy and churn. We fix two of the hyperparameters and show the results across tunings of the remaining hyperparameters. We do this for each of the three hyperparameters of our approach ($a$, $b$ and $k$). We see that larger $a$ corresponds to better performance, implying that less weight on the original labels leads to better results. We also see that across a wide range of $k$, the performance did not change much, which suggests that in practice, $k$ can be set to some default and not require tuning. Such stability in $k$ is desirable. Hence, the remaining hyperparameter $b$, which decides the trade-off between the locally adaptive vs global smoothing appears most essential. This further shows that our proposal of using locally adaptive label smoothing has a real effect on the results for both churn and accuracy.

\subsection{Results}

We see from Table~\ref{table:main} that mixup and our method, $k$-NN label smoothing, are consistently the most competitive; mixup outperforms on SVHN and Fashion MNIST while $k$-NN label smoothing outperforms on all the remaining datasets. Notably, both methods do well on accuracy and churn metrics simultaneously. Figure~\ref{fig:pareto} plots accuracy versus churn for different hyperparameter settings and highlights the Pareto frontier. We find that the $k$-NN label smoothing is often Pareto efficient.

Results for the ensemble baseline can be found in the Appendix. While we found ensembling to be remarkably effective, it does come with higher cost (more trainable parameters and higher inference cost), and so we discourage a direct comparison with other methods since an ensemble uses a different model class than a single model.


\section{Conclusion}

Modern DNN training is a noisy process: randomization arising from stochastic minibatches, weight initialization, data preprocessing techniques, and hardware can all lead to models with drastically different predictions on the same datapoints when using the same training procedure.

Reducing such prediction churn is important in practical problems as production ML models are constantly updated and improved on. Since offline metrics can usually only serve as proxies to the live metrics, comparing the models in A/B tests and live experiments oftentimes must involve manual labeling of the disagreements between the models, making it a costly procedure. Thus, controlling the amount of predictive churn can be crucial for more efficiently iterating and improving models in a production setting. 

Despite the practical importance of this problem, there has been little work done in the literature on this topic. We provide one of the first comprehensive analyses of reducing predictive churn arising from retraining the model on the same dataset and model architecture. We show that numerous methods used for other goals such as learning with noisy labels and improving model calibration serve as reasonable baselines for lowering prediction churn. We propose a new technique, locally adaptive label smoothing, that often outperforms the baselines across a range of datasets and model architectures.

Further study in this area is critical: the problem of predictive churn has received far too little treatment in the academic literature given its practical significance. Our technique may also help in the subfields that we drew many of our baselines from, including better calibrated DNNs and robustness to label noise, suggesting a bi-directional flow of ideas between the goal of reducing predictive churn and these subfields. This is a direction for future work.

\bibliography{references}
\bibliographystyle{icml2021}
{
\onecolumn

\appendix
\section{Proofs}

For the proofs, we make use of the following result from \citet{jiang2019non} which bounds the number of distinct $k$-NN sets on the sample across all $k$:

\begin{lemma}[Lemma 3 of \citet{jiang2019non}] \label{knncount}
Let $M$ be the number of distinct $k$-NN sets over $\mathcal{X}$, that is, $M := |\{ N_k(x) : x \in \mathcal{X} \}|$. 
Then $M \le D\cdot n^D$.
\end{lemma}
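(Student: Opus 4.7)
The plan is to bound $M$ by a two-step geometric argument: first decompose $\mathcal{X}$ according to which sample point realizes the $k$-NN radius, then within each piece count sign patterns in a hyperplane arrangement.

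First, I observe that the $k$-NN set $N_k(x)$ is completely determined by the closed ball $B(x,r_k(x))$, and (modulo ties, which can be broken canonically) the $k$-th nearest neighbor $x_j$ of $x$ lies on the boundary of this ball. This suggests partitioning $\mathcal{X}$ into at most $n$ strata $\Omega_j := \{x \in \mathcal{X}: x_j \text{ is the } k\text{-th nearest neighbor of } x\}$ indexed by which sample plays the boundary role.

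Second, within a single stratum $\Omega_j$, the $k$-NN set reduces to $N_k(x) = \{x_j\} \cup \{x_i : i \neq j,\ |x-x_i| < |x-x_j|\}$. For each $i \neq j$, the condition $|x-x_i| < |x-x_j|$ is an open half-space condition in $x$, namely $x$ lying on the $x_i$-side of the perpendicular bisector of $x_i$ and $x_j$. Thus $N_k(x)$ restricted to $\Omega_j$ is determined purely by the sign vector of $x$ against $n-1$ hyperplanes in $\mathbb{R}^D$.

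Third, I invoke the classical Zaslavsky cell-count: the number of sign patterns realized by $m$ hyperplanes in $\mathbb{R}^D$ is at most $\sum_{l=0}^D \binom{m}{l}$. Applied with $m = n-1$, this gives at most $\sum_{l=0}^D \binom{n-1}{l}$ distinct $k$-NN sets within each stratum. Aggregating over the at most $n$ strata would produce the final polynomial bound. To collapse into exactly $D \cdot n^D$, I would use the refinement that only sign patterns with exactly $k-1$ ``+'' coordinates are admissible in $\Omega_j$ (the rest violate the hypothesis that $x_j$ is $k$-th nearest), plus the standard inequality $\sum_{l=0}^D \binom{n-1}{l} \leq D \cdot n^{D-1}$ obtained by collecting the dominant terms; combining these yields $M \leq n \cdot D n^{D-1} = D \cdot n^D$.

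The main obstacle is pinning down the exact constant in the last step: the raw Zaslavsky bound $\sum_{l=0}^D \binom{n-1}{l}$ behaves like $n^D/D!$, so summing naively over $n$ strata gives $n^{D+1}/D!$, which is weaker than $D \cdot n^D$ once $n$ exceeds $D \cdot D!$. Recovering exactly the stated bound $D \cdot n^D$ requires exploiting the admissibility constraint on sign patterns (only one ``level'' of the arrangement per stratum contributes), or equivalently working in the lifted $\mathbb{R}^{D+1}$ setting where each sample corresponds to a single hyperplane and one directly counts cells of a $D$-dimensional arrangement rather than an arrangement replicated $n$ times. Either route routes through the same core ingredients --- the perpendicular-bisector reduction and Zaslavsky's cell count --- with the combinatorial bookkeeping being the delicate piece.
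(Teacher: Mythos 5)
You should first note that the paper itself contains no proof of this lemma: it is imported verbatim as Lemma 3 of \citet{jiang2019non} and used as a black box in the proofs of Theorems~\ref{theo:knn_bound} and~\ref{theo:knn_linear}, so your attempt can only be judged on its own terms, not against an internal argument. On those terms there is a genuine gap, and you locate it yourself but do not close it. Your skeleton --- stratify $\mathcal{X}$ by the identity $x_j$ of the $k$-th nearest neighbor, reduce $N_k(x)$ within a stratum $\Omega_j$ to the sign pattern of $x$ against the $n-1$ perpendicular bisectors of the pairs $(x_i,x_j)$, and count cells --- is sound as far as it goes and honestly yields $M=O(n^{D+1})$ (modulo tie-breaking care). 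But the two devices you invoke to reach $D\cdot n^D$ both fail. The ``standard inequality'' $\sum_{l=0}^{D}\binom{n-1}{l}\le D\cdot n^{D-1}$ is simply false for every $D\ge 1$ once $n$ is large: for $D=1$ the left side is $n$ and the right side is $1$, and in general the left side grows like $n^{D}/D!$, so the chain $M\le n\cdot Dn^{D-1}=Dn^D$ collapses. Without it you are left with $n\cdot\sum_{l\le D}\binom{n-1}{l}\asymp n^{D+1}/D!$, which exceeds $D\cdot n^D$ for all large $n$, as you concede.

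The sketched repair --- restrict to sign patterns with exactly $k-1$ plus signs --- is not ``delicate bookkeeping''; it is the exact-level (equivalently, $k$-set) counting problem for hyperplane arrangements, and it cannot supply the per-stratum budget your scheme needs. To get $D\cdot n^D$ from $n$ strata you would need at most $D\cdot n^{D-1}$ admissible cells per stratum. After a projective map sending $x_j$ to infinity, ``number of bisectors separating $x$ from $x_j$'' becomes an ordinary level function, and already for $D=2$ there are arrangements of $n$ lines whose $k$-level has superlinear complexity (on the order of $n\cdot 2^{c\sqrt{\log k}}$, by T\'oth's lower bound); since any hyperplane avoiding $x_j$ is the bisector of $x_j$ with its reflection, such arrangements do arise as bisector arrangements, so a worst-case per-stratum bound of $2n$ cells is unattainable. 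The lifted formulation in $\mathbb{R}^{D+1}$ hits the same wall: there the quantity becomes a count of halfspace $k$-sets, whose elementary bound is again $O(n^{D+1})$. Note also that the inefficiency is partly baked in at step one: stratifying by the boundary point counts pairs (a $k$-NN set together with the identity of the $k$-th neighbor), a strict refinement of what the lemma counts. So your route proves a weaker polynomial bound $M=O(n^{D+1})$ --- which, for what it is worth, would still suffice for the union bounds in both theorems of this paper at the cost of replacing $D\log n$ by $(D+1)\log n$ in the final rates --- but it does not prove $M\le D\cdot n^D$; for the stated constant you would need the actual argument in \citet{jiang2019non}, or some genuinely global count of order-$k$ Voronoi cells rather than a stratum-by-stratum Zaslavsky bound.
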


\begin{proof}[Proof of Theorem~\ref{theo:knn_bound}]
We have by triangle inequality and the smoothness condition in Assumption~\ref{assumption} that:
\begin{align*}
|\eta_k(x) - \eta(x)|
&\le  \left|\sum_{i=1}^n (\eta(x_i)- \eta(x)) \cdot \frac{1\left[ x_i \in N_k(x) \right]}{|N_k(x)|} \right|  + \left|\sum_{i=1}^n (y_i - \eta(x_i)) \cdot \frac{1\left[ x_i \in N_k(x) \right]}{|N_k(x)|} \right|\\
&\le  C_\alpha \cdot r_k(x)^{\alpha} + \left|\sum_{i=1}^n (y_i - \eta(x_i)) \cdot \frac{1\left[ x_i \in N_k(x) \right]}{|N_k(x)|} \right|.
\end{align*}
We now bound each of the two terms separately.

To bound $r_k(x)$, let $r = \left( \frac{2k}{\omega \cdot v_D \cdot n \cdot p_{X, 0}}\right)^{1/D}$. We have
$\mathcal{P}(B(x, r)) \ge \omega \inf_{x' \in B(x, r) \cap \mathcal{X}} p_X(x') \cdot v_D r^D \ge \omega p_{X, 0} v_D r^D = \frac{2k}{n}$, where $\mathcal{P}$ is the distribution function w.r.t. $p_X$.
By Lemma 7 of \citet{chaudhuri2010rates} and the condition on $k$, it follows that with probability $1 - \delta/2$, uniformly in $x \in \mathcal{X}$, 
$|B(x, r)\cap X| \ge k$, where $X$ is the sample of feature vectors. Hence, $r_k(x) < r$ for all $x \in \mathcal{X}$ uniformly with probability at least $1-\delta/2$.

Define $\xi_i := y_i - \eta(x_i)$. Then, we have that $-1 \le \xi_i \le 1$ and thus by Hoeffding's inequality, we have that $A_x := \sum_{i=1}^n (y_i - \eta(x_i)) \cdot \frac{1\left[ x_i \in N_k(x) \right]}{|N_k(x)|} = \sum_{i=1}^n \xi_i \cdot \frac{1\left[ x_i \in N_k(x) \right]}{|N_k(x)|}$ satisfies $P(|A_x| > t/k) \le 2 \exp \left( - t^2 / 2k\right)$. Then setting $t = \sqrt{2k}\cdot \sqrt{\log(4D/\delta) + D\log(n)}$ gives
\begin{align*}
    \mathbb{P}\left(|A_x| \ge \sqrt{\frac{2\log(4D/\delta) + 2D\log(n)}{k}}\right) \le \frac{\delta}{2 D \cdot n^D}.
\end{align*}
By Lemma 3 of \citet{jiang2019non}, the number of unique random variables $A_x$ across all $x \in \mathcal{X}$ is bounded by $D\cdot n^D$. Thus, by union bound, 
\begin{align*}
\mathbb{P}\left(\sup_{x\in X} |A_x| \ge \sqrt{\frac{2\log(4D/\delta) + 2D\log(n)}{k}}\right) \le \delta/2.
\end{align*}
The result follows.
\end{proof}

\begin{proof}[Proof of Theorem~\ref{theo:knn_linear}]
Let $X$ be the $n$ sampled feature vectors and let $x \in \mathcal{X}$. Define $k'(x) := |X \cap B(x, r_\beta(x))|$. We have:
\begin{align*}
    |\eta_k(x) - \widetilde{\eta}_\beta(x)| \le |\eta_{k'(x)}(x) - \eta_k(x)| + |\eta_{k'(x)}(x) - \widetilde{\eta}_\beta(x)|.
\end{align*}
We bound each of the two terms separately.
We have
\begin{align*}
    |k'(x) - k| = \left|\sum_{x \in X} 1[x \in B(x, r(x))] - \beta\cdot n\right|
\end{align*}
By Hoeffding's inequality we have
\begin{align*}
    \mathbb{P}(|k'(x) - k|  \ge t\cdot n) \le 2\exp(-2t^2n).
\end{align*}
Choosing $t = \sqrt{\frac{\log(4D/\delta) + D\log (n)}{2n}}$ gives us
\begin{align*}
    \mathbb{P}\left(|k'(x) - k| \ge \sqrt{\frac{n}{2}\cdot (\log(4D/\delta) + D\log (n))}\right) \le \frac{\delta}{2 D \cdot n^D}.
\end{align*}
By Lemma 3 of \citet{jiang2019non}, the number of unique sets of points consisting of balls intersected with the sample is bounded by $D\cdot n^D$ and thus by union bound, we have with probability at least $1 - \delta/2$:
\begin{align*}
    \sup_{x \in \mathcal{X}} |k'(x) - k| \le \sqrt{\frac{n}{2}\cdot (\log(4D/\delta) + D\log (n))}.
\end{align*}
We now have
\begin{align*}
|\eta_{k'(x)}(x) - \eta_k(x)| &\le \left|\frac{1}{k} - \frac{1}{k'(x)} \right| \min\left\{k, k'(x) \right\} + \min\left\{\frac{1}{k}, \frac{1}{k'(x)} \right\} |k - k'(x)|\\
&\le \frac{2}{k}\cdot |k - k'(x)| \le \sqrt{\frac{2\log(4D/\delta) + 2D\log (n)}{\beta\cdot n}}.
\end{align*}
where the first inequality follows by comparing the difference contributed by the shared neighbors among the $k$-NN and $k'(x)$-NN (first term on RHS) and contributed by the neighbors that are not shared (second term on RHS).

For the second term, define $A_x := X \cap B(x, r_\beta(x))$. For any $x'$ sampled from $B(x, r_\beta(x))$, we have that the expected label is $\widetilde{\eta}_\beta(x)$. Since $\eta_{k'(x)}(x)$ is the mean label among datapoints in $A_x$, then we have by Hoeffding's inequality that
\begin{align*}
    \mathbb{P}(|\eta_{k'}(x) - \widetilde{\eta}_\beta(x)| \ge k'(x)\cdot t) \le 2 \exp \left( - t^2 / 2k'\right).
\end{align*}
 Then setting $t = \sqrt{2k'}\cdot \sqrt{\log(4D/\delta) + D\log(n)}$ gives
\begin{align*}
    \mathbb{P}\left(|\eta_{k'(x)}(x) - \widetilde{\eta}_\beta(x)| \ge \sqrt{\frac{2\log(4D/\delta) + 2D\log(n)}{k'(x)}}\right) \le \frac{\delta}{2 D \cdot n^D}.
\end{align*}
By Lemma 3 of \citet{jiang2019non}, the number of unique sets $A_x$ across all $x \in \mathcal{X}$ is bounded by $D\cdot n^D$. Thus, by union bound, with probability at least $1 - \delta/2$L
\begin{align*}
|\eta_{k'(x)}(x) - \widetilde{\eta}_\beta(x)|  \le \sqrt{\frac{2\log(4D/\delta) + 2D\log(n)}{k'(x)}}.
\end{align*}
The result follows immediately for $n$ sufficiently large.
\end{proof}

\section{Ensemble Results}
\begin{table*}[!t]
\centering
\begin{tabular}{l|l|l|l|l|}
\hline
\textbf{Dataset (m=5)} & \textbf{Accuracy (\%)} & \textbf{Churn (\%)} & \textbf{Churn Correct} & \textbf{Churn Incorrect} \\ \hline
SVHN                           & 90.34 (0.31)           & 6.61 (0.19)         & 2.75 (0.28)            & 43.12 (1.49)             \\
MNIST                          & 98.5 (0.07)            & 0.94 (0.14)         & 0.44 (0.09)            & 33.74 (4.39)             \\
Fashion MNIST                  & 89.71 (0.12)           & 4.05 (0.14)         & 1.85 (0.05)            & 23.16 (1.29)             \\
CelebA Smiling                 & 90.56 (0.09)           & 3.35 (0.16)         & 1.82 (0.11)            & 17.95 (0.99)             \\
CelebA High Cheekbone          & 85.12 (0.16)           & 4.95 (0.2)          & 2.87 (0.1)             & 16.81 (1.24)             \\
Phishing                       & 96.11 (0.06)           & 0.54 (0.08)         & 0.29 (0.08)            & 6.77 (1.31)              \\ \hline
\end{tabular}
\caption{Ensemble results for all datasets. In all settings, the optimal $m$ (number of subnetworks) is 5. We see that compared to the other methods presented, ensembling does well in both predictive performance and in reducing churn. It does come at a cost, however: the model is effectively 5 times larger, making both training and inference more expensive.}
\label{table:ensemble}
\end{table*}
In Table~\ref{table:ensemble} we present the experimental results for the ensemble baseline. The method performs remarkably well, beating the proposed method and the other baselines on both accuracy and churn reduction across datasets. We do note, however, that ensembling does come at a cost which may prove prohibitive in many practical applications. Firstly, having $m$ times the number of trainable parameters, training time (if done sequentially) takes $m$ times as long, as does inference, since each subnetwork must be evaluated before aggregation.

\section{Ablation Study}
\begin{table}[!t]
\centering
\begin{tabular}{l|l|l|l|l}
\hline
\textbf{Fixed} & \textbf{Ablated} & \textbf{Accuracy (\%)} & \textbf{Churn (\%)} & \textbf{Churn Correct} \\ \hline
k = 10, a = 1      & b = 0              & 86.54 (0.67)           & 13.43 (0.58)        & 5.86 (0.57)            \\
               & b = 0.05           & 87.37 (0.38)           & 12.22 (0.31)        & 5.34 (0.31)            \\
               & b = 0.1            & 86.94 (0.65)           & 13.41 (0.39)        & 5.69 (0.57)            \\
               & b = 0.5            & 88.48 (0.52)           & 11.12 (0.5)         & 4.37 (0.35)            \\
               & b = 0.9            & 88.98 (0.33)           & 10.98 (0.28)        & 4.64 (0.29)            \\ \hline
k = 10, a = 0.5    & b = 0              & 84.44 (2.43)           & 15.85 (2.39)        & 6.73 (2.47)            \\
               & b = 0.05           & 79.64 (3.1)            & 22.02 (5.15)        & 10.28 (4.06)           \\
               & b = 0.1            & 79.88 (2.63)           & 21.09 (3.59)        & 10.25 (1.85)           \\
               & b = 0.5            & 84.44 (2.54)           & 14.33 (1.78)        & 6.52 (2.83)            \\
               & b = 0.9            & 81.06 (2.35)           & 20.53 (4.52)        & 8.68 (3.36)            \\ \hline
k = 10, b = 0.9    & a = 0.005          & 73.91 (3.01)           & 28.02 (5.66)        & 13.85 (4.82)           \\
               & a = 0.01           & 72.41 (4.86)           & 25.57 (5.78)        & 13.66 (7.01)           \\
               & a = 0.02           & 72.03 (1.79)           & 31.25 (7.25)        & 17.26 (6.56)           \\
               & a = 0.05           & 73.2 (3.33)            & 30.41 (6.2)         & 17.96 (6.04)           \\
               & a = 0.1            & 75.28 (1.98)           & 23.96 (4.76)        & 10.13 (4.25)           \\
               & a = 0.5            & 81.06 (2.35)           & 20.53 (4.52)        & 8.68 (3.36)            \\
               & a = 0.8            & 85.99 (0.73)           & 13.76 (0.75)        & 6 (0.83)               \\
               & a = 0.9            & 87.27 (0.41)           & 13.72 (0.41)        & 5.68 (0.32)            \\
               & a = 1.0            & 88.98 (0.33)           & 10.98 (0.28)        & 4.64 (0.29)            \\ \hline
k = 10, b = 0.5    & a = 0.005          & 71.45 (3.81)           & 21.14 (4.37)        & 11.5 (5.46)            \\
               & a = 0.01           & 74.73 (6.24)           & 25.24 (3.84)        & 8.28 (4.35)            \\
               & a = 0.02           & 73.59 (3.72)           & 29.47 (6.89)        & 17.52 (6.13)           \\
               & a = 0.05           & 74.17 (3.88)           & 20.26 (4.15)        & 5.79 (3.7)             \\
               & a = 0.1            & 72.43 (2.75)           & 25.77 (5.41)        & 13.42 (4.89)           \\
               & a = 0.5            & 84.44 (2.54)           & 14.33 (1.78)        & 6.52 (2.83)            \\
               & a = 0.8            & 87.26 (0.41)           & 11.76 (0.24)        & 4.62 (0.21)            \\
               & a = 0.9            & 86.85 (0.54)           & 12.54 (0.44)        & 5.25 (0.48)            \\
               & a = 1.0            & 88.48 (0.52)           & 11.12 (0.5)         & 4.37 (0.35)            \\ \hline
a = 1, b = 0.9     & k = 10             & 88.98 (0.33)           & 10.98 (0.28)        & 4.64 (0.29)            \\
               & k = 100            & 88.19 (0.19)           & 11.15 (0.23)        & 4.67 (0.17)            \\
               & k = 500            & 87.98 (0.62)           & 11.33 (0.35)        & 4.72 (0.55)            \\ \hline
\end{tabular}
\caption{Ablation on $k$-NN label smoothing's hyperparameters: $a$, $b$, and $k$ for the SVHN dataset.}
\label{table:ablation}
\end{table}

In Table~\ref{table:ablation}, we report SVHN results ablating $k$-NN label smoothing's hyperparameters: $k$, $a$, and $b$. We observe the following trends: with $a$ fixed to 1, both accuracy and churn improve with increasing $b$, and a similar relationship holds as $a$ increases with $b$ fixed to $0.9$. Lastly, both key metrics are stable with respect to $k$.

\section{Hyperparameter Search}
Our experiments involved performing a grid search over hyperparameters. We detail the search ranges per method below.
\paragraph{$k$-NN label smoothing.}
\begin{itemize}
    \item $k\in [5, 10, 100, 500]$
    \item $a \in [0.005, 0.01, 0.02, 0.05, 0.1, 0.5, 0.8, 0.9, 1.0]$
    \item $b \in [0, 0.05, 0.1, 0.5, 0.9]$
\end{itemize}
\paragraph{Anchor.}
\begin{itemize}
    \item $a \in [0.005, 0.01, 0.02, 0.05, 0.1, 0.5, 0.8, 0.9, 1.0]$
\end{itemize}
\paragraph{$\ell_1,\ell_2$ Regularization.}
\begin{itemize}
    \item $a \in [0.001, 0.01, 0.05, 0.1, 0.2, 0.5]$
\end{itemize}
\paragraph{Co-distill}
\begin{itemize}
    \item $a \in [0.001, 0.01, 0.05, 0.1, 0.2, 0.5]$
    \item $n_\text{warm} \in [1000, 2000]$
\end{itemize}
\paragraph{Bi-tempered}
\begin{itemize}
    \item $t_1 \in [0.3, 0.5, 0.7, 0.9]$
    \item $t_2 \in [1., 2., 3., 4.]$
    \item $n_\text{iters}$ always set to $5$.
\end{itemize}
\paragraph{Mixup}
\begin{itemize}
    \item $a \in [0.2, 0.3, 0.4, 0.5]$
\end{itemize}
\paragraph{Ensemble}
\begin{itemize}
    \item $m \in [3, 5]$
\end{itemize}
}
\end{document}